\documentclass[conference]{IEEEtran}

\usepackage{amsmath,amssymb,amsfonts}
\usepackage{graphicx}
\usepackage{textcomp}
\usepackage[dvipsnames]{xcolor}
\usepackage{multirow}
\usepackage{booktabs}
\usepackage{siunitx}
\usepackage[ruled,vlined]{algorithm2e}
\usepackage{pgfplots}
\pgfplotsset{width=1\linewidth,compat=1.9}
\pgfplotsset{compat=newest}
\usepackage{tikz}
\usepackage{array}
\usepackage{stfloats}
\usepackage{url}
\usepackage{verbatim}

\usetikzlibrary{positioning,calc}

\usepackage{subcaption}
\usepackage{amsthm}

\usepackage[caption=false,font=footnotesize,labelfont=sf,textfont=sf]{subfig}

\newtheorem{theorem}{Theorem}

\newcommand{\ours}{\textsc{DART-FL}}

\title{DART-FL: Burst-Aware Multitask Federated Learning under Dynamic Inference Demand at the Edge}

\author{
\IEEEauthorblockN{
Yiming Xie\textsuperscript{1},
Pinrui Yu\textsuperscript{1},
Geng Yuan\textsuperscript{2},
Xue Lin\textsuperscript{1},
Ningfang Mi\textsuperscript{1}
}
\IEEEauthorblockA{
\textsuperscript{1}\textit{Department of Electrical and Computer Engineering, Northeastern University, Boston, MA, USA}\\
\textsuperscript{2}\textit{School of Computing, University of Georgia, Athens, GA, USA}\\
Email:
\textsuperscript{1}\{xie.yimi, yu.pin, xue.lin, n.mi\}@northeastern.edu,
\textsuperscript{2}geng.yuan@uga.edu
}
}

\begin{document}

\maketitle



\begin{abstract}
Edge intelligence systems increasingly require model training and online
inference to coexist on resource-constrained devices, while inference demand
can vary substantially across tasks over time. This creates two coupled
challenges: sufficient computation must be reserved for inference to maintain
service-level objectives (SLOs), while the remaining training capacity should
adapt to task-specific demand so that frequently requested tasks can improve
earlier during training.

We propose an SLO-aware, demand-driven multitask federated learning framework (\ours{}) that jointly adapts the inference--training resource split and task-level
training emphasis. At each scheduling interval, \ours{} uses the
inference backlog and profiled service capacity to determine the minimum
resource allocation required for inference. The remaining training capacity
is then distributed across tasks using a queue-aware DPP-inspired scheduler, and the resulting task allocations are mapped to dynamic loss weights. This allows
tasks experiencing higher inference demand to receive greater training
emphasis in earlier communication rounds. Clients train a shared backbone
with task-specific heads, and the complete multitask model is aggregated
through FedAvg.

We evaluate \ours{} using Stanford Cars and Oxford Flowers 102 under
both synthetic and real Alibaba trace-derived workloads.
Results show that \ours{} dynamically adapts the inference-training resource split to
time-varying inference demand and shifts the learning progress of high-demand
tasks toward their burst periods, improving model accuracy when those tasks
are frequently requested while maintaining comparable long-term multitask
performance.
\end{abstract}
\section{Introduction}

Edge devices are increasingly required to support both continuous model
training and real-time inference services. Applications such as autonomous
driving, mobile vision assistants, and on-device language understanding
rely on models that must continuously adapt to newly collected data while
simultaneously serving user requests~\cite{li2024edgeol,han2024joint}.
In such systems, computation resources are shared between training and
inference, creating a fundamental tension: allocating more resources to
inference reduces the capacity available for model improvement, whereas
aggressive training may increase inference latency and lead to
service-level objective (SLO) violations~\cite{han2024joint}.

This challenge becomes more significant in \emph{multitask} edge
applications. For example, an autonomous vehicle may simultaneously support
object recognition, scene understanding, and traffic-related perception
tasks. Similarly, mobile and embedded devices increasingly execute multiple
vision or language services using a shared backbone with task-specific
prediction heads. Such architectures enable parameter sharing and joint
learning across related tasks, while the relative weighting of task
objectives can substantially affect multitask optimization
~\cite{kendall2018multitask,sener2018multiobjective}.

In practice, inference demand can also be highly \emph{bursty}~\cite{li2023alpaserve}. A sudden
increase in requests for one task can rapidly build up its inference queue,
requiring additional computation resources to maintain the target SLO.
Meanwhile, the remaining training resources must be distributed across
multiple tasks. Treating all tasks uniformly under such workload changes
can be inefficient: resources may continue to be allocated to tasks with low
current inference demand while a highly requested task receives insufficient training
resources. Therefore, a joint scheduling mechanism should address two
coupled decisions: \emph{how much computation should be reserved for
inference to satisfy the SLO, and how should the remaining training capacity be distributed among tasks according to their current demand?}

Several recent works have investigated the coexistence of training and
inference on resource-constrained edge platforms. EdgeOL, for example,
studies efficient in-situ online learning on edge devices under limited
resources~\cite{li2024edgeol}. Other studies consider federated learning
systems that simultaneously provide model inference services and optimize
training and inference resource allocation across distributed
clients~\cite{han2024joint}. These studies demonstrate the importance of
coordinating learning and serving workloads. However, existing approaches
primarily focus on resource allocation, participation, or model-update
decisions, and do not explicitly connect \emph{bursty task-specific
inference demand} with the allocation of training effort across multiple
tasks.

In this paper, we study a multitask federated learning
system in which online inference and local training share limited
computation resources. Our key insight is that inference demand should
influence not only the amount of computation allocated to inference, but
also how the remaining capacity is distributed across training tasks.
A burst in task-specific inference requests should therefore trigger two coordinated
responses: sufficient resources should first be reserved for inference to
control service delay and maintain SLOs, while the remaining training capacity should
temporarily prioritize the high-demand task. By 
advancing the training progress of the high-demand task, 
the system can improve its model accuracy when that task is actively serving inference requests,
while maintaining
similar overall accuracy after training converges.

To realize this idea, we propose an \emph{SLO-aware joint inference and
training framework}, \ours, with queue-aware scheduling inspired by
Drift-Plus-Penalty (DPP) control~\cite{neely2010stochastic}.
The framework operates in two stages. First, an SLO-aware resource allocator
uses the current inference backlog and profiled device service capacity to
determine the minimum ratio of computation resources required for inference, leaving the
remaining capacity for local training. Second, a queue-aware DPP-inspired
scheduler uses task-specific pre-service queues as demand signals and
dynamically distributes the available training capacity among tasks. The
resulting allocation is then mapped to adaptive task loss weights, therefore coupling real-time inference demand with local multitask optimization.

This design provides a closed-loop interaction between inference serving
and federated training. When inference demand increases, the system
automatically reserves additional resources for request processing; when
the demand becomes concentrated on particular tasks, their growing queue states increase
their training priority. As demand changes over time, the scheduler dynamically adapts
both the inference-training resource split and the task-level training
allocation, while preserving shared representation learning through the
multitask model.
We further analyze the proposed framework theoretically. Specifically, we characterize
the minimum inference resource allocation required to satisfy the SLO, establish
inference queue stability under a supportable workload,
analyze the properties of the queue-aware training allocation, and characterize the convergence behavior of federated multitask training under
dynamic task weights.

We evaluate the proposed framework using a multitask vision workload
consisting of Stanford Cars~\cite{krause2013cars} and Oxford Flowers 102~\cite{nilsback2008flowers} with a shared ResNet
backbone and task-specific heads. The experiments emulate time-varying and
bursty task-specific inference arrivals while federated training proceeds
concurrently. The results demonstrate that {\ours} dynamically adapts
the inference-training resource split to workload changes and reallocates
training capacity toward high-demand tasks, improving model accuracy for actively requested tasks while maintaining overall multitask learning performance.

The main contributions of this paper are summarized as follows:

\begin{itemize}
    \item We formulate the coexistence of online inference and multitask
    federated training as a joint scheduling problem under time-varying and
    bursty task-specific inference demand.

    \item We develop an SLO-aware resource allocation mechanism that uses
    inference queue states and profiled service capacity to determine the
    inference-training computation split while preserving the maximum
    feasible capacity for local training.

    \item We design a queue-aware DPP-inspired training scheduler that
    distributes the remaining training capacity across tasks according to
    task-specific inference demand and translates the resulting allocation
    into dynamic multitask loss weights.

    \item We provide theoretical analysis of SLO-aware resource allocation,
    inference queue stability, and 
    federated multitask convergence under dynamic task weights, and evaluate the framework on a multitask federated
    learning testbed under bursty inference workloads.
\end{itemize}
\section{Related Work}
\label{sec:related}

\subsection{Federated Multitask Learning}

Federated learning (FL) enables distributed clients to collaboratively
train models while keeping raw data local. FedAvg established the standard
framework in which clients perform local optimization and the server
aggregates their model updates into a global model~\cite{mcmahan2017fedavg}. Subsequent studies have extended FL to
heterogeneous edge environments and multiple learning tasks.

Federated multitask learning (FMTL) considers scenarios in which multiple
related tasks are learned collaboratively while exploiting shared
representations or cross-task knowledge. Existing studies have investigated
multitask learning under communication and computation constraints.
Ma \emph{et al.} develop dynamic user and task scheduling for FMTL over
wireless networks~\cite{ma2023communication}, while Zhuang \emph{et al.}
propose MAS to efficiently coordinate multiple simultaneous federated
learning tasks~\cite{zhuang2023mas}.

These approaches primarily coordinate multitask training according to
training dynamics or system resource constraints. In contrast, our work
considers a different source of task heterogeneity:
\emph{time-varying inference demand}. 
We use online task-specific inference demand to determine when each task
should receive greater training emphasis. Consequently, a task experiencing high inference
demand can receive more training attention in earlier rounds, advancing its learning
progress and improving model accuracy when the task is actively requested, without necessarily
changing the final converged accuracy.



\subsection{Joint Training and Inference on Edge Devices}

Training and inference increasingly coexist on resource-constrained edge
platforms. Unlike the conventional train-then-deploy paradigm, continuously
learning systems must update their models while simultaneously serving
online inference requests. Since the two workloads compete for the same
computation resources, increasing training activity may reduce inference
service capacity, whereas prioritizing inference can slow model
improvement.

Recent studies have investigated this training-inference coupling.
For example, \textit{EdgeOL} develops an in-situ online learning framework
for continuously adapting models on resource-constrained edge devices while
maintaining inference performance~\cite{li2024edgeol}. Han \emph{et al.}
study federated learning while providing model-as-a-service and jointly
optimize learning participation, inference service, and computation and
communication resource allocation~\cite{han2024joint}. These studies
demonstrate the importance of coordinating model learning with online
inference services.

Our work differs from prior approaches in two important aspects. First, we consider a
\emph{multitask} model in which inference requests are associated with
different tasks and may exhibit distinct, time-varying demand patterns.
Second, inference demand affects not only the computation allocation between
inference and training, but also how the remaining training 
capacity is distributed across tasks. Specifically, the aggregate inference backlog
determines the SLO-aware inference-training resource split, while
task-specific queue states determine the relative training emphasis among
tasks. This two-level scheduling mechanism establishes a direct connection between online inference demand and
the temporal progression of multitask federated training.

\subsection{Queue-Aware Online Resource and Task Scheduling}

Queue-aware scheduling has been extensively studied for online resource
allocation in communication and computing systems. Lyapunov optimization~\cite{neely2010stochastic}
provides a general framework for making online decisions under stochastic
arrivals without requiring prior knowledge of future workloads.
In particular, Drift-Plus-Penalty (DPP) methods balance queue stability
against application-specific performance objectives and have been widely
used for dynamic resource allocation~\cite{neely2010stochastic}.

These principles are particularly relevant to bursty workloads because
queue states provide an online measure of accumulated demand. A growing
queue indicates that the current service or resource allocation is
insufficient relative to recent arrivals, allowing an online scheduler to
adapt without predicting future requests. Existing queue-aware approaches~\cite{han2024dynamic,bi2021lyapunov,dai2024lyapunov},
however, generally use queue information to control resources that
\emph{directly serve} the corresponding workload, such as computation,
communication, or service capacity.

Our framework uses queue information in two complementary roles. 
First, the aggregate
pre-service inference backlog is used to determine the minimum
inference resource ratio required to meet the target SLO. Second, the task-specific
pre-service queues serve as demand signals for a DPP-inspired
training scheduler. Rather than directly allocating training resources to serve these queues, the scheduler uses them to determine
the optimal distribution of the available training capacity
across tasks, which is subsequently mapped to dynamic multitask training
weights. Hence, inference demand influences both
\emph{how much} computation remains available for training and
\emph{which tasks} receive greater training emphasis.

\subsection{Summary and Distinction}

Existing FMTL studies primarily schedule training based on  learning
dynamics or system resource conditions, while joint training-inference
systems primarily determine how resources should be divided between
learning and serving workloads. In contrast, our framework connects these
two decisions through online, task-specific inference demand.
Specifically, bursty inference demand first adjusts the SLO-aware
inference-training resource split and then determines the relative training
priority of individual tasks. This allows the learning progress of a
high-demand task to be advanced to earlier communication rounds, enabling
the model to provide higher accuracy during periods when that task is
frequently requested, while preserving the long-term performance of the
multitask model.
\section{SLO-Aware Joint Inference and Training Framework}
\label{sec:method}

We consider a federated multitask learning system in which
online inference and local model training coexist on resource-constrained
edge clients. Since these workloads share the same computation resources,
increasing the resources allocated to inference reduces those available for
model training, and vice versa.
To address this coupling, we develop an SLO-aware, demand-driven multitask federated learning framework ({\ours}). Figure~\ref{fig:framework} illustrates the overall architecture of {\ours}, which consists of four main stages: 
bursty multitask inference demand, SLO-aware inference-training resource
allocation, queue-aware training scheduling, and federated multitask
learning.

Specifically, at each scheduling interval, newly arrived inference requests are first
incorporated into task-specific pre-service queues. A shared scheduler then
determines the minimum inference resource ratio required to satisfy a
predefined SLO and assigns the remaining
computation capacity to local training. The available training capacity is
subsequently distributed among learning tasks according to their current
inference demand. 
The resulting task allocations are mapped to dynamic task weights, which
determine the relative contribution of each task to the local multitask
training objective.

\begin{figure*}[t]
    \centering
    \includegraphics[width=\textwidth,height=0.46\textheight]{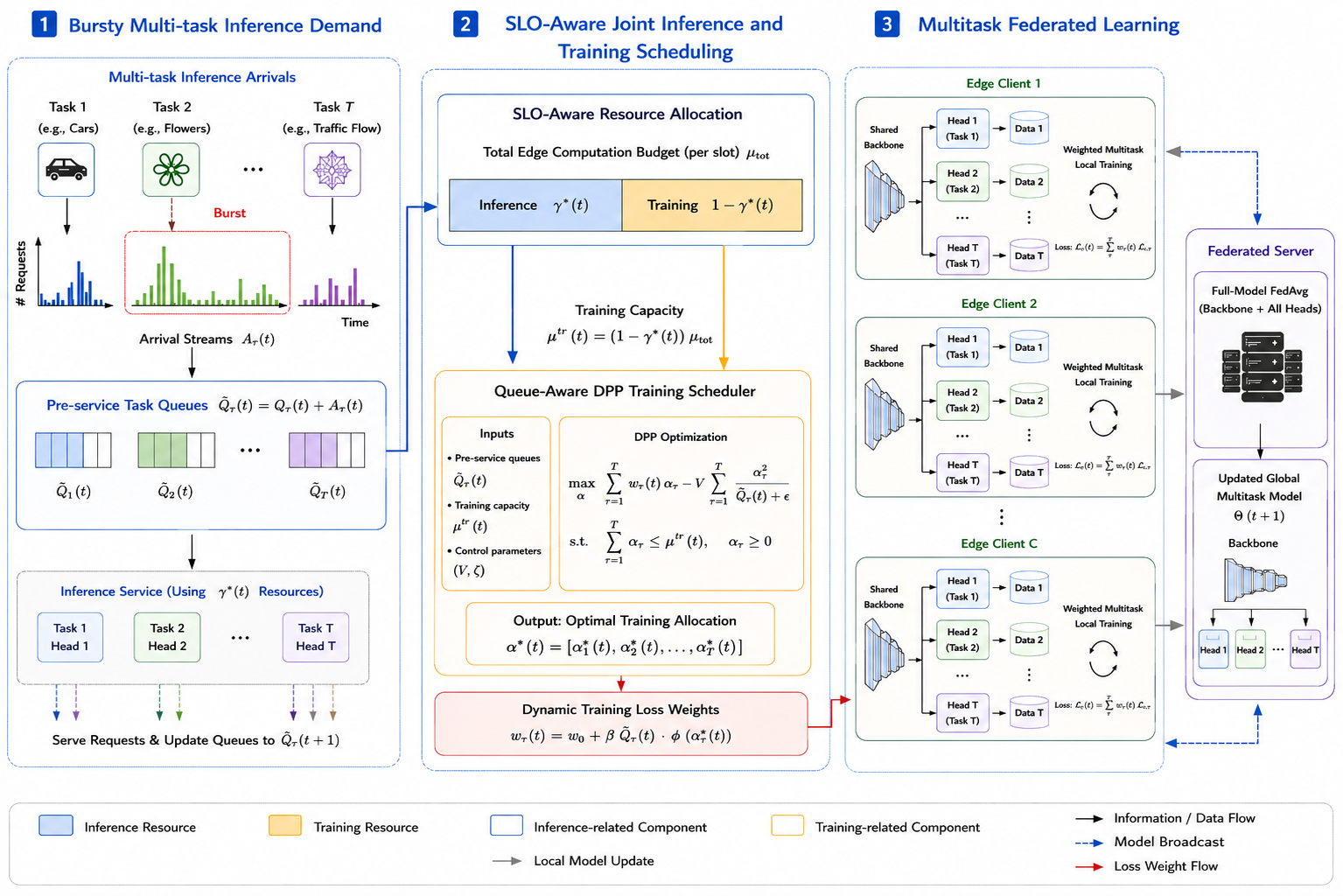}
    \caption{Overview of {\ours} for federated multitask learning. Bursty task-specific
    inference arrivals are incorporated into pre-service queues. The
    SLO-aware resource allocator first determines the inference resource
    ratio and assigns the remaining computation capacity to training.
    The shared queue-aware scheduler then distributes the available training
    capacity across tasks according to their current inference demand, and
    the resulting allocation determines the dynamic task weights used by
    selected clients for local multitask training. Finally, full-model
    FedAvg aggregates the shared backbone and all task-specific heads.}
    \label{fig:framework}
\end{figure*}

\subsection{System Model and Queue Dynamics}
\label{subsec:system}

Consider one parameter server and a set of edge clients
$\mathcal{C}=\{1,\ldots,C\}$. Each client $c\in\mathcal C$ maintains a
private local dataset $\mathcal D_c$ and participates in training a common
set of tasks $\mathcal T=\{1,\ldots,T\}$. Raw training samples remain on
the clients, while model parameters are exchanged with the server during
federated aggregation.

Each client maintains the same multitask model architecture
$\Theta=\{\Theta_b,\Theta_1,\ldots,\Theta_T\}$, where $\Theta_b$ denotes
the shared feature extractor and $\Theta_\tau$ denotes the task-specific
prediction head for task $\tau$. For an input $\mathbf{x}$, task $\tau$
produces
$f_\tau(\mathbf{x})
=h_\tau(g(\mathbf{x};\Theta_b),\Theta_\tau)$.

\textbf{Shared inference workload.}
We model inference demand at the system level because the scheduler makes
one shared resource-allocation decision for each scheduling interval $t$. 
Let $A_\tau(t)$ denote the number of newly arrived inference requests for
task $\tau$ during interval $t$, and let $Q_\tau(t)$ denote the backlog
carried from the previous interval.

Since the current arrivals should immediately influence the scheduling
decision, we define the pre-service queue as
\begin{equation}
\widetilde Q_\tau(t)
=
Q_\tau(t)+A_\tau(t).
\label{eq:pre_queue}
\end{equation}
Then, the aggregate pre-service backlog is
$\widetilde Q(t)=
\sum_{\tau\in\mathcal T}\widetilde Q_\tau(t)$.

Let $S_\tau(t)$ denote the number of inference requests of task $\tau$
served during interval $t$. After service, the queue evolves according to
\begin{equation}
Q_\tau(t+1)
=
\left[
\widetilde Q_\tau(t)-S_\tau(t)
\right]^+,
\label{eq:queue}
\end{equation}
where $[x]^+\triangleq\max\{x,0\}$.

This arrival-first queue convention allows both the SLO-aware resource
allocator and the training scheduler to react to inference bursts within
the same scheduling interval.

\textbf{Computation resources.}
Let $\widetilde{\phi}$ denote the maximum instantaneous computation
capacity available to the shared inference-training scheduler, and let
$\gamma(t)\in[0,1]$ denote the fraction assigned to inference.
Accordingly,
$\phi^{\mathrm{inf}}(t)=\gamma(t)\widetilde{\phi}$ and
$\phi^{\mathrm{tr}}(t)=(1-\gamma(t))\widetilde{\phi}$, where $\phi^{\mathrm{inf}}$ and $\phi^{\mathrm{tr}}$ are the computation resources allocated to inference and training, respectively.

Let $\mu^{\max}$ denote the maximum inference throughput under full
inference allocation, obtained through offline hardware profiling in
requests per second. Assuming that the effective service capability
approximately scales with the allocated computation ratio, the inference
service rate is
\begin{equation}
\mu^{\mathrm{inf}}(t)
=
\gamma(t)\mu^{\max}.
\label{eq:inf_rate}
\end{equation}


\subsection{SLO-Aware Inference--Training Resource Allocation}
\label{subsec:slo_allocation}

We next determine the shared computation ratio assigned to inference and
training. Motivated by Little's Law~\cite{little1961proof}, the time required to clear the current pre-service backlog
under the allocated inference capacity as
\begin{equation}
D(t)
=
\frac{\widetilde Q(t)}
{\mu^{\mathrm{inf}}(t)}
=
\frac{\widetilde Q(t)}
{\gamma(t)\mu^{\max}}.
\label{eq:delay}
\end{equation}

Let $D_{\mathrm{SLO}}$ denote the maximum acceptable inference delay.
The condition $D(t)\leq D_{\mathrm{SLO}}$ requires
\begin{equation}
\gamma(t)\geq
\frac{\widetilde Q(t)}{D_{\mathrm{SLO}}\mu^{\max}}.
\label{eq:SLO}
\end{equation}
We therefore define the workload-dependent required inference ratio as
\begin{equation}
\gamma^{\mathrm{req}}(t)
=
\frac{\widetilde Q(t)}
{D_{\mathrm{SLO}}\mu^{\max}}.
\label{eq:gamma_ratio}
\end{equation}

The scheduler assigns only the inference resources required by the current
workload while preserving the remaining computation for local training.
\begin{equation}
\gamma^{*}(t)
=
\left[
\frac{\widetilde Q(t)}
{D_{\mathrm{SLO}}\mu^{\max}}
\right]_{\gamma_{\min}}^{\gamma_{\max}},
\label{eq:gamma_opt}
\end{equation}
where
$[x]_a^b\triangleq\min\{b,\max\{a,x\}\}$.

Equivalently, using 
the maximum full-resource service capacity $M^{\max}=T_s\mu^{\max}$  during a scheduling interval of duration $T_s$, we have: 
\begin{equation}
\gamma^{\mathrm{req}}(t)
=
\frac{\widetilde Q(t)T_s}
{D_{\mathrm{SLO}}M^{\max}}.
\label{eq:splitratio_max}
\end{equation}
The two forms are identical; $\mu^{\max}$ is measured in requests per
second, whereas $M^{\max}$ is measured in requests per scheduling
interval.

If $\gamma^{\mathrm{req}}(t)>\gamma_{\max}$, the workload exceeds the
maximum inference allocation permitted by the resource constraint.
The scheduler therefore saturates at $\gamma_{\max}$, and the target SLO
is temporarily infeasible.

The remaining training ratio is
$\gamma^{\mathrm{tr}}(t)=1-\gamma^{*}(t)$.
Let $\mu^{\mathrm{tr,max}}$ denote the maximum effective training capacity
when all available computation is assigned to training. The training
capacity available after the inference decision is therefore
\begin{equation}
\mu^{\mathrm{tr}}(t)
=
\big(1-\gamma^{*}(t)\big)
\mu^{\mathrm{tr,max}}.
\label{eq:training_capacity}
\end{equation}

Thus, the first-stage decision determines the total amount of computation
available for training, while the second-stage scheduler determines how
this capacity is distributed among tasks.

\subsection{Queue-Aware Training Scheduling}
\label{subsec:dpp}

Now, we determine how the available training capacity
$\mu^{\mathrm{tr}}(t)$ is distributed among the tasks.

Let $\alpha_\tau(t)$ denote the fraction of the available training capacity
assigned to task $\tau$, subject to
$\sum_{\tau\in\mathcal T}\alpha_\tau(t)=1$ and
$\alpha_{\min}\leq\alpha_\tau(t)\leq\alpha_{\max}$.
The effective task-level training allocation is therefore
\begin{equation}
\mu_\tau^{\mathrm{tr}}(t)
=
\mu^{\mathrm{tr}}(t)\alpha_\tau(t).
\label{eq:taskleveltraining}
\end{equation}

Notice that $\gamma(t)$ and $\alpha_\tau(t)$ represent two distinct
scheduling decisions. The first-stage variable $\gamma(t)$ divides the
shared computation capacity between inference and training, whereas the
second-stage variable $\alpha_\tau(t)$ distributes only the remaining
training capacity among tasks.

\textbf{Queue-aware training utility.}
To prioritize tasks experiencing higher inference demand, we define
\begin{equation}
P(t)
=
\sum_{\tau\in\mathcal T}
\widetilde Q_\tau(t)
\log\!\left(
\mu^{\mathrm{tr}}(t)\alpha_\tau(t)+\epsilon_0
\right),
\label{eq:priorituzetask}
\end{equation}
where $\epsilon_0>0$ avoids the logarithmic singularity. A larger
$\widetilde Q_\tau(t)$ increases the scheduling importance of task $\tau$,
whereas the logarithmic utility introduces diminishing returns and
discourages a single task from monopolizing the available training
capacity.

\textbf{Temporal regularization.}
Responding directly to instantaneous demand can cause large allocation
changes across consecutive scheduling intervals. We therefore introduce
the switching cost
\begin{equation}
G(t)
=
\zeta
\sum_{\tau\in\mathcal T}
\big(
\alpha_\tau(t)-\alpha_\tau(t-1)
\big)^2,
\label{eq:switchcost}
\end{equation}
where $\zeta>0$ controls the intrinsic scale of temporal regularization.

Inspired by the Drift-Plus-Penalty (DPP) principle~\cite{neely2010stochastic}, we balance the workload-dependent training
utility against temporal allocation changes. The shared scheduler solves
\begin{align}
\boldsymbol{\alpha}^{*}(t)
=
\arg\min_{\boldsymbol{\alpha}(t)}
\quad&
V\zeta
\sum_{\tau\in\mathcal T}
\big(
\alpha_\tau(t)-\alpha_\tau(t-1)
\big)^2
\nonumber\\
&-
\sum_{\tau\in\mathcal T}
\widetilde Q_\tau(t)
\log\!\left(
\mu^{\mathrm{tr}}(t)\alpha_\tau(t)+\epsilon_0
\right)
\nonumber\\
\text{s.t.}\quad&
\sum_{\tau\in\mathcal T}\alpha_\tau(t)=1,
\nonumber\\
&
\alpha_{\min}
\leq
\alpha_\tau(t)
\leq
\alpha_{\max},
\quad\forall\tau.
\label{eq:dpp_opt}
\end{align}

The queue-dependent term makes the scheduler responsive to the current
inference workload. In particular, increasing
$\widetilde Q_\tau(t)$ strengthens the incentive to allocate training
capacity to task $\tau$. In contrast, the switching term discourages large
changes from the previous allocation. Therefore, the scheduler changes the
training distribution only when the workload-dependent utility gain is
sufficient to compensate for the temporal regularization cost.

The parameters $V$ and $\zeta$ have distinct roles. The coefficient
$\zeta$ sets the numerical scale of the switching cost, whereas $V>0$
controls the responsiveness--smoothness tradeoff. A smaller $V$ allows
the allocation to respond more aggressively to current inference demand,
while a larger $V$ favors smoother allocations across consecutive
intervals.

We emphasize that the inference queues are not directly served by
$\alpha_\tau(t)$. Instead, they provide workload-state information for the
training scheduler. Hence, (\ref{eq:dpp_opt}) is a DPP-inspired
queue-aware training allocation objective rather than the exact minimizer
of the inference queue's Lyapunov drift.

\subsection{Dynamic Multitask Training and Federated Aggregation}
\label{subsec:training}

The shared task allocation is incorporated into local multitask training
through dynamic task weights. Let $w_0>0$ denote the base task weight and
let $\beta\geq0$ control the strength of demand-aware prioritization.
For task $\tau$, we define
\begin{equation}
w_\tau(t)
=
w_0\big(1+\beta\alpha_\tau^{*}(t)\big),
\label{eq:weight-ratio}
\end{equation}
where $w_\tau(t)$ denotes the dynamic training weight assigned to task
$\tau$ at scheduling interval $t$, and $\alpha_\tau(t)^{*}$ is the optimal training emphasis determined by the shared queue-aware scheduler. A larger
$w_\tau(t)$ increases the contribution of task $\tau$ to the local
multitask training objective, thereby directing more training emphasis
toward tasks with higher current demand.

The normalized task weight is defined as:
\begin{equation}
\qquad
\omega_\tau(t)
=
\frac{w_\tau(t)}
{\sum_{j\in\mathcal T}w_j(t)}.
\label{eq:weight-ratio-normalized}
\end{equation}
Here, $\omega_\tau(t)$ represents the normalized training importance of
task $\tau$ relative to all tasks, satisfying
$\omega_\tau(t)\geq0$ and
$\sum_{\tau\in\mathcal T}\omega_\tau(t)=1$.
%
These normalized training weights 
$\{\omega_\tau(t)\}_{\tau\in\mathcal T}$ are then used by all clients selected
for the corresponding federated round. Consequently, inference demand
changes the relative training emphasis of the federation without requiring
independent per-client scheduling states.

Let $L_{c,\tau}(\Theta)$ denote the task-specific training loss at client
$c$. The local multitask objective is
\begin{equation}
L_c(\Theta;t)
=
\sum_{\tau\in\mathcal T}
\omega_\tau(t)L_{c,\tau}(\Theta).
\label{eq:local_objective}
\end{equation}

Client $c$ performs local optimization using the scheduler-weighted
multitask objective. For SGD, a local update can be written as
\begin{equation}
\Theta_c
\leftarrow
\Theta_c
-
\eta\nabla L_c(\Theta_c;t),
\label{eq:localupdate_c}
\end{equation}
where $\eta>0$ is the model learning rate and is unrelated to the
scheduler regularization coefficient $\zeta$.

After local training, each 
client uploads its complete multitask
model, including the shared backbone and all task-specific heads.
Let $\mathcal S_r$ denote the clients selected in communication round $r$.
The server performs full-model FedAvg:
\begin{equation}
\Theta^{r+1}
=
\sum_{c\in\mathcal S_r}
\frac{N_c}
{\sum_{k\in\mathcal S_r}N_k}
\Theta_c^{r+1},
\label{eq:fedavg}
\end{equation}
where $N_c$ denotes the number of local training samples at client $c$.
Thus, both the shared backbone and all task-specific heads are aggregated
into one global multitask model.

\subsection{Overall Scheduling and Federated Training Algorithm}
\label{subsec:algorithm}

Algorithm~\ref{alg:framework} summarizes the overall workflow of the
proposed framework. At each scheduling interval, the shared scheduler first
incorporates newly arrived inference requests into the task-specific
pre-service queues. It then computes the SLO-aware inference allocation
$\gamma^{*}(t)$ and determines the remaining training capacity. Based on
the task-specific pre-service queues, the scheduler solves the queue-aware
training allocation problem to obtain
$\boldsymbol{\alpha}^{*}(t)$ and converts the resulting task allocations
into dynamic task weights. The same scheduler-generated weights are then
used by all clients selected in the corresponding federated round. After
local multitask optimization, the server performs full-model FedAvg over
the shared backbone and all task-specific heads.

\begin{algorithm}[t]
\caption{Shared SLO-Aware Multitask FL Scheduler}
\label{alg:framework}

\KwIn{
Global model $\Theta^0$,
task set $\mathcal T$,
client set $\mathcal C$
}

Initialize $Q_\tau(0)=0$ and feasible $\alpha_\tau(0)$;

\For{$r=0,\ldots,R-1$}{

Set $t\leftarrow r$;

\textbf{Queue update:}
observe $A_\tau(t)$ and compute
$\widetilde Q_\tau(t)=Q_\tau(t)+A_\tau(t)$,
$\forall\tau\in\mathcal T$;

\textbf{Resource allocation:}
compute the SLO-aware inference ratio
$\gamma^*(t)$ using~(\ref{eq:gamma_opt}) and obtain
$\mu^{\mathrm{tr}}(t)$;

Serve inference requests and update
$Q_\tau(t+1)
=
[\widetilde Q_\tau(t)-S_\tau(t)]^+$;

\textbf{Training allocation:}
solve~(\ref{eq:dpp_opt}) to obtain
$\boldsymbol{\alpha}^*(t)$;

Compute and normalize the dynamic task weights
$\{\omega_\tau(t)\}_{\tau\in\mathcal T}$;

Select participating clients $\mathcal S_r$ and broadcast
$\Theta^r$ and $\{\omega_\tau(t)\}$;

\ForEach{$c\in\mathcal S_r$ \textbf{in parallel}}{
Train $\Theta_c^r$ using the weighted multitask objective
in~(\ref{eq:local_objective});
}

Aggregate the complete multitask models using
full-model FedAvg in~(\ref{eq:fedavg}) to obtain $\Theta^{r+1}$;

}

\end{algorithm}

\section{Theoretical Analysis}
\label{sec:analysis}

We analyze the proposed framework from four aspects: SLO-aware inference
allocation, inference-queue stability, queue-aware training emphasis, and
federated multitask convergence under dynamic task weights.

\subsection{Minimum SLO-Aware Resource Allocation}
\label{subsec:slo_analysis}

Given the aggregate pre-service backlog
$\widetilde Q(t)=\sum_{\tau}\widetilde Q_\tau(t)$, we define the
backlog-clearing delay surrogate as
\begin{equation}
D(t)
=
\frac{\widetilde Q(t)}
{\gamma(t)\mu^{\max}}.
\label{eq:delay_estimate_analysis}
\end{equation}
This is the time required to serve the current backlog under the allocated
capacity assuming no new arrivals, rather than a per-request latency
guarantee.

Requiring
$D(t)\leq D_{\mathrm{SLO}}$
implies
\begin{equation}
\gamma(t)
\geq
\gamma^{\mathrm{req}}(t)
\triangleq
\frac{\widetilde Q(t)}
{D_{\mathrm{SLO}}\mu^{\max}}.
\label{eq:slo_lower_bound}
\end{equation}

\begin{theorem}
\label{thm:slo_optimal}
The SLO-aware allocation
\begin{equation}
\gamma^{*}(t)
=
\min\!\left\{
\gamma_{\max},
\max\!\left\{
\gamma_{\min},
\gamma^{\mathrm{req}}(t)
\right\}
\right\}
\label{eq:slo_optimal}
\end{equation}
uses the minimum feasible inference resource required to meet the
backlog-clearing delay target whenever feasible, thereby maximizing the
remaining computation capacity for training.
\end{theorem}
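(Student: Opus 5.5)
The plan is to treat this as a one-dimensional optimization over $\gamma(t)\in[\gamma_{\min},\gamma_{\max}]$. The training capacity $\mu^{\mathrm{tr}}(t)=(1-\gamma(t))\mu^{\mathrm{tr,max}}$ from (\ref{eq:training_capacity}) is strictly decreasing in $\gamma(t)$. Hence maximizing the remaining training capacity is equivalent to choosing the smallest admissible $\gamma(t)$ that satisfies the SLO constraint.

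\textbf{Step 1: characterize the feasible set.} Since $\mu^{\max}>0$ and $\gamma(t)>0$, the map $\gamma\mapsto D(t)=\widetilde Q(t)/(\gamma\mu^{\max})$ in (\ref{eq:delay_estimate_analysis}) is nonincreasing. The condition $D(t)\le D_{\mathrm{SLO}}$ is therefore equivalent to $\gamma(t)\ge\gamma^{\mathrm{req}}(t)$, as already derived in (\ref{eq:slo_lower_bound}). Intersecting this with the resource box gives the feasible set
\[
\mathcal F(t)=\{\gamma:\ \max\{\gamma_{\min},\gamma^{\mathrm{req}}(t)\}\le\gamma\le\gamma_{\max}\}.
\]
This set is nonempty exactly when $\gamma^{\mathrm{req}}(t)\le\gamma_{\max}$, assuming $\gamma_{\min}\le\gamma_{\max}$. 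I would state this as the precise meaning of ``whenever feasible.''

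\textbf{Step 2: case analysis for the feasible regime.}
\begin{itemize}
\item If $\gamma^{\mathrm{req}}(t)\le\gamma_{\min}$, then every admissible $\gamma$ satisfies the SLO. The minimum element of $\mathcal F(t)$ is $\gamma_{\min}$, which matches (\ref{eq:slo_optimal}).
\item If $\gamma_{\min}<\gamma^{\mathrm{req}}(t)\le\gamma_{\max}$, then the minimum element is $\gamma^{\mathrm{req}}(t)$. It is feasible because it makes $D(t)=D_{\mathrm{SLO}}$ exactly, and any smaller $\gamma$ violates (\ref{eq:slo_lower_bound}). This again matches (\ref{eq:slo_optimal}).
\end{itemize}
In both cases $\gamma^*(t)=\min\mathcal F(t)$. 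Because $1-\gamma$ is strictly decreasing, $\gamma^*(t)$ is the unique maximizer of $\mu^{\mathrm{tr}}(t)$ over $\mathcal F(t)$. This proves both the minimality and the optimality claims.

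\textbf{Step 3: infeasible regime.} If $\gamma^{\mathrm{req}}(t)>\gamma_{\max}$, then $\mathcal F(t)=\emptyset$, and (\ref{eq:slo_optimal}) returns $\gamma_{\max}$. The theorem makes no claim here, but I would remark that $\gamma_{\max}$ minimizes $D(t)$ over the admissible box. It is thus the best-effort choice, consistent with the saturation discussion after (\ref{eq:splitratio_max}).

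\textbf{Edge cases.} The main subtlety is not analytic but definitional. I must handle $\widetilde Q(t)=0$, where $D(t)=0$ for any $\gamma>0$ and so $\gamma_{\min}$ is chosen. I must also make sure $\gamma_{\min}>0$ (or adopt the convention $D=0$ when the backlog is zero), so that $D(t)$ is well defined at $\gamma=\gamma_{\min}$. Beyond that, the proof is a direct monotonicity argument.
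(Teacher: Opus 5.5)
Your proposal is correct and follows the same argument as the paper: feasibility forces $\gamma(t)\ge\gamma^{\mathrm{req}}(t)$, and the training ratio $1-\gamma(t)$ is decreasing, so the smallest admissible $\gamma(t)$ in $[\gamma_{\min},\gamma_{\max}]$ is the clipped value, with saturation at $\gamma_{\max}$ when $\gamma^{\mathrm{req}}(t)>\gamma_{\max}$. Your explicit feasible interval, two-case check, and remarks on $\widetilde Q(t)=0$ and $\gamma_{\min}>0$ make precise what the paper's short proof leaves implicit.
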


\begin{proof}
From (\ref{eq:slo_lower_bound}), any feasible allocation satisfies
$\gamma(t)\geq\gamma^{\mathrm{req}}(t)$. Since the remaining training ratio
is $1-\gamma(t)$, selecting the smallest feasible $\gamma(t)$ subject to
$\gamma_{\min}\leq\gamma(t)\leq\gamma_{\max}$ yields
(\ref{eq:slo_optimal}). If
$\gamma^{\mathrm{req}}(t)>\gamma_{\max}$, the target is infeasible and the
scheduler saturates at $\gamma_{\max}$.
\end{proof}

\subsection{Inference Queue Stability}
\label{subsec:queue_stability}

Let $Q(t)=\sum_\tau Q_\tau(t)$ and $A(t)=\sum_\tau A_\tau(t)$, such that
$\widetilde Q(t)=Q(t)+A(t)$. Given inference allocation $\gamma(t)$ and
scheduling interval length $T_s$, the actual service is
\begin{equation}
S(t)
=
\min\left\{
\widetilde Q(t),
\gamma(t)\mu^{\max}T_s
\right\}.
\label{eq:actual_service}
\end{equation}
Hence,
\begin{equation}
Q(t+1)
=
[Q(t)+A(t)-S(t)]^+.
\label{eq:aggregate_queue}
\end{equation}

Let $S_{\max}=\gamma_{\max}\mu^{\max}T_s$ denote the maximum service
capacity per interval. We assume
\[
\mathbb{E}[A^2(t)\mid Q(t)]\leq A_2<\infty,
\qquad
0\leq S(t)\leq S_{\max},
\]
and a supportable workload such that, for some $\epsilon>0$,
\begin{equation}
\mathbb{E}[A(t)\mid Q(t)]
\leq
S_{\max}-\epsilon.
\label{eq:supportable}
\end{equation}

Using the quadratic Lyapunov function
$L(Q(t))=\frac{1}{2}Q^2(t)$, the standard drift argument
~\cite{neely2010stochastic} gives
\begin{equation}
\Delta(t)
\leq
B+
Q(t)\mathbb{E}[A(t)-S(t)\mid Q(t)],
\label{eq:drift_bound}
\end{equation}
where $B=\frac{1}{2}(A_2+S_{\max}^2)<\infty$.

\begin{theorem}
\label{thm:queue_stability}
Under the above assumptions, the aggregate inference queue under the
SLO-aware allocation is strongly stable.
\end{theorem}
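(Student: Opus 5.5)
The plan is the standard Lyapunov drift argument of~\cite{neely2010stochastic}, starting from the drift bound (\ref{eq:drift_bound}). The goal is to show that, under the allocation $\gamma^*(t)$ of Theorem~\ref{thm:slo_optimal}, the conditional drift satisfies $\Delta(t)\le B'-\epsilon Q(t)$ for a finite constant $B'$. Telescoping then gives strong stability in the usual sense,
\[
\limsup_{t\to\infty}\frac{1}{t}\sum_{s=0}^{t-1}\mathbb{E}[Q(s)]\le \frac{B'}{\epsilon}<\infty .
\]
The main difficulty is that (\ref{eq:drift_bound}) contains $\mathbb{E}[A(t)-S(t)\mid Q(t)]$, whereas the supportability assumption (\ref{eq:supportable}) only compares $A(t)$ with $S_{\max}$. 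The SLO-aware rule does not always allocate $\gamma_{\max}$. In light load it allocates only $\gamma^{\mathrm{req}}(t)$, or even $\gamma_{\min}$, so $S(t)$ can be much smaller than $S_{\max}$. I will handle this with a threshold argument: when the backlog is large, the allocation saturates and the queue is served at full rate.

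\textbf{Step 1 (large backlog forces full service).} Define $Q_0=\max\{S_{\max},\,\gamma_{\max}\mu^{\max}D_{\mathrm{SLO}}\}$. Suppose $Q(t)\ge Q_0$. Then $\widetilde Q(t)=Q(t)+A(t)\ge Q_0$ because $A(t)\ge 0$, so $\gamma^{\mathrm{req}}(t)=\widetilde Q(t)/(D_{\mathrm{SLO}}\mu^{\max})\ge\gamma_{\max}$. Hence $\gamma^*(t)=\gamma_{\max}$ by (\ref{eq:slo_optimal}). By (\ref{eq:actual_service}), $S(t)=\min\{\widetilde Q(t),S_{\max}\}=S_{\max}$, since also $\widetilde Q(t)\ge S_{\max}$. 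This holds for every realization of $A(t)$, so $S(t)$ is deterministic given $Q(t)\ge Q_0$. Combining with (\ref{eq:supportable}) gives
\[
\mathbb{E}[A(t)-S(t)\mid Q(t)]\le -\epsilon .
\]

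\textbf{Step 2 (small backlog).} Suppose $Q(t)<Q_0$. Use $S(t)\ge 0$ and $\mathbb{E}[A(t)\mid Q(t)]\le S_{\max}$ to get
\[
Q(t)\,\mathbb{E}[A(t)-S(t)\mid Q(t)]\le Q_0S_{\max}.
\]
Adding and subtracting $\epsilon Q(t)$, and using $Q(t)<Q_0$, gives $Q(t)\,\mathbb{E}[A(t)-S(t)\mid Q(t)]\le Q_0S_{\max}+\epsilon Q_0-\epsilon Q(t)$.

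\textbf{Step 3 (unified bound).} The two cases together give, for all values of $Q(t)$,
\[
\Delta(t)\le B'-\epsilon Q(t), \qquad B'=B+Q_0(S_{\max}+\epsilon)<\infty .
\]

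\textbf{Step 4 (telescoping).} Take expectations over $Q(t)$ and sum over $s=0,\dots,t-1$ to obtain
\[
\mathbb{E}[L(Q(t))]-\mathbb{E}[L(Q(0))]\le tB'-\epsilon\sum_{s<t}\mathbb{E}[Q(s)].
\]
Use $L\ge 0$ and $Q(0)=0$ (Algorithm~\ref{alg:framework}), divide by $\epsilon t$, and take the $\limsup$ to get the strong stability bound above.

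I expect Step 1 to be the only delicate point. One must check that saturation of (\ref{eq:slo_optimal}) is triggered by the backlog $Q(t)$ alone, independently of the random arrivals. This works because $\widetilde Q(t)\ge Q(t)$ and both $\gamma^{\mathrm{req}}$ and the $\min$ in (\ref{eq:actual_service}) are monotone in $\widetilde Q(t)$. It is also where the minimal allocation enters the argument: it could cause instability only through under-service, and under-service cannot happen once the backlog exceeds $Q_0$.
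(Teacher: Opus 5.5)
Your proposal is correct and follows the paper's proof: the same threshold $Q_0=\max\{S_{\max},\gamma_{\max}\mu^{\max}D_{\mathrm{SLO}}\}$, the same saturation argument ($\widetilde Q(t)\ge Q(t)\ge Q_0$ forces $\gamma^*(t)=\gamma_{\max}$ and $S(t)=S_{\max}$, hence $\mathbb{E}[A(t)-S(t)\mid Q(t)]\le-\epsilon$), and then the standard Lyapunov criterion. The only difference is that you write out two steps the paper delegates to the cited stability criterion: the explicit constant $B'=B+Q_0(S_{\max}+\epsilon)$ that absorbs the drift in the bounded region, and the telescoping sum.
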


\begin{proof}
Define
\[
Q_0
=
\max\left\{
\gamma_{\max}D_{\mathrm{SLO}}\mu^{\max},
S_{\max}
\right\}.
\]
Since $\widetilde Q(t)=Q(t)+A(t)\geq Q(t)$, whenever
$Q(t)\geq Q_0$, we have
$\gamma^{\mathrm{req}}(t)\geq\gamma_{\max}$ and hence
$\gamma^{*}(t)=\gamma_{\max}$. Moreover,
$\widetilde Q(t)\geq S_{\max}$, so
(\ref{eq:actual_service}) gives $S(t)=S_{\max}$. Therefore,
\[
\mathbb{E}[A(t)-S(t)\mid Q(t)]
\leq-\epsilon,
\qquad Q(t)\geq Q_0.
\]
Substituting into (\ref{eq:drift_bound}) yields
\[
\Delta(t)\leq B-\epsilon Q(t)
\]
outside the bounded region $Q(t)<Q_0$. The drift within this bounded region
is finite and can be absorbed into a finite constant. By the standard
Lyapunov stability criterion~\cite{neely2010stochastic},
\begin{equation}
\limsup_{R\rightarrow\infty}
\frac{1}{R}
\sum_{t=0}^{R-1}
\mathbb{E}[Q(t)]
<\infty.
\end{equation}
\end{proof}

Thus, temporary bursts may exceed the backlog-clearing delay target, while
long-term queue stability is preserved whenever the workload is supportable.

\subsection{Queue-Aware Training Emphasis}
\label{subsec:dpp_analysis}

We next analyze the DPP-inspired queue-aware training emphasis problem
(\ref{eq:dpp_opt}), where inference queues provide demand signals and
$\alpha_\tau(t)$ represents relative training emphasis rather than physical
GPU allocation.

Suppressing the time index, the objective is
\begin{equation}
f(\boldsymbol{\alpha})
=
V\zeta
\sum_{\tau}
(\alpha_\tau-\alpha_\tau^{\mathrm{prev}})^2
-
\sum_{\tau}
\widetilde Q_\tau
\log(
\mu^{\mathrm{tr}}\alpha_\tau+\epsilon_0
).
\label{eq:dpp_analysis_obj}
\end{equation}

\begin{theorem}
\label{thm:dpp_convex}
For $V>0$, $\zeta>0$, and $\epsilon_0>0$, if the simplex and box constraints
define a nonempty feasible region, the training-emphasis problem is strongly
convex and admits a unique global optimum
$\boldsymbol{\alpha}^{*}(t)$.
\end{theorem}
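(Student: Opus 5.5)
The plan is to check convexity term by term through the Hessian of $f$ in (\ref{eq:dpp_analysis_obj}), and then use compactness of the feasible set to get existence and strong convexity to get uniqueness.

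\textbf{Step 1: the quadratic switching term.} The first term $V\zeta\sum_\tau(\alpha_\tau-\alpha_\tau^{\mathrm{prev}})^2$ is separable. Its Hessian is $2V\zeta I$. Since $V>0$ and $\zeta>0$, this term is $2V\zeta$-strongly convex on $\mathbb{R}^T$.

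\textbf{Step 2: the queue-weighted logarithmic term.} On the feasible region we have $\alpha_\tau\geq\alpha_{\min}$. I would either assume $\alpha_{\min}\geq 0$, or note that $\mu^{\mathrm{tr}}\alpha_\tau+\epsilon_0>0$ on a neighborhood of the feasible set. Under either reading, each summand $-\widetilde Q_\tau\log(\mu^{\mathrm{tr}}\alpha_\tau+\epsilon_0)$ is well defined and twice differentiable. Its second derivative is
\[
\frac{\widetilde Q_\tau(\mu^{\mathrm{tr}})^2}{(\mu^{\mathrm{tr}}\alpha_\tau+\epsilon_0)^2}\geq 0,
\]
because $\widetilde Q_\tau\geq 0$ by (\ref{eq:pre_queue}). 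Here $\epsilon_0>0$ is what keeps the argument of the logarithm positive even when $\alpha_\tau=0$ or $\mu^{\mathrm{tr}}=0$.

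\textbf{Step 3: combine the terms.} The Hessian of $f$ is diagonal, with entries at least $2V\zeta>0$. So $f$ is $2V\zeta$-strongly convex on an open convex set containing the feasible region. This holds regardless of the queue values: even if every $\widetilde Q_\tau=0$, the switching term alone supplies the strong convexity.

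\textbf{Step 4: existence.} The feasible region $\{\boldsymbol\alpha:\sum_\tau\alpha_\tau=1,\ \alpha_{\min}\le\alpha_\tau\le\alpha_{\max}\}$ is an intersection of a hyperplane and a box. It is therefore convex and compact, and it is nonempty by hypothesis. Since $f$ is continuous on this set, Weierstrass gives a minimizer.

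\textbf{Step 5: uniqueness.} Suppose $\boldsymbol\alpha^1\neq\boldsymbol\alpha^2$ were two minimizers. By convexity of the feasible set, their midpoint is feasible. Strong convexity then gives
\[
f\!\left(\tfrac12(\boldsymbol\alpha^1+\boldsymbol\alpha^2)\right)\le f^*-\tfrac{V\zeta}{4}\|\boldsymbol\alpha^1-\boldsymbol\alpha^2\|^2<f^*,
\]
where $f^*$ is the optimal value. This contradicts optimality, so the minimizer $\boldsymbol\alpha^*(t)$ is unique. Strong convexity on the affine constraint set also follows directly, since the restriction of a strongly convex function to an affine subspace stays strongly convex.

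\textbf{Main obstacle.} The delicate point is the domain of the logarithm. If $\alpha_{\min}$ were allowed to be negative, $\mu^{\mathrm{tr}}\alpha_\tau+\epsilon_0$ could become nonpositive. I would resolve this by making explicit the implicit standing assumption $0\le\alpha_{\min}$, which is natural since $\alpha_\tau$ is a fraction. I would also note that $\mu^{\mathrm{tr}}\ge0$ by (\ref{eq:training_capacity}), since $\gamma^*\le\gamma_{\max}\le1$. Everything else is routine.
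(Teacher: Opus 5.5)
Your proposal is correct and takes the same route as the paper: the paper observes that the Hessian of $f$ is diagonal with entries $2V\zeta+\widetilde Q_\tau(\mu^{\mathrm{tr}})^2/(\mu^{\mathrm{tr}}\alpha_\tau+\epsilon_0)^2\geq 2V\zeta>0$, and from this concludes strong convexity and a unique optimum over the convex feasible set. Your additions fill in details the paper leaves implicit: existence via compactness of the nonempty feasible polytope, the explicit midpoint uniqueness argument, and the standing assumptions $\alpha_{\min}\geq 0$ and $\mu^{\mathrm{tr}}\geq 0$ that keep the logarithm well defined.
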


\begin{proof}
The Hessian is diagonal, with
\begin{equation}
\frac{\partial^2 f}
{\partial\alpha_\tau^2}
=
2V\zeta+
\frac{
\widetilde Q_\tau(\mu^{\mathrm{tr}})^2
}{
(\mu^{\mathrm{tr}}\alpha_\tau+\epsilon_0)^2
}
\geq
2V\zeta>0.
\end{equation}
Hence, the objective is strongly convex, implying a unique global optimum
over the convex feasible set.
\end{proof}

Moreover,
\begin{equation}
\frac{\partial f}{\partial\alpha_\tau}
=
2V\zeta
(\alpha_\tau-\alpha_\tau^{\mathrm{prev}})
-
\frac{
\widetilde Q_\tau\mu^{\mathrm{tr}}
}{
\mu^{\mathrm{tr}}\alpha_\tau+\epsilon_0
}.
\label{eq:alpha_gradient}
\end{equation}
A larger $\widetilde Q_\tau$ therefore strengthens the incentive to increase
the training emphasis of task $\tau$, while the quadratic term suppresses
abrupt changes. Hence, $V\zeta$ controls the
responsiveness--smoothness tradeoff.

Equation~(\ref{eq:dpp_opt}) is DPP-inspired rather than the exact minimizer
of the inference queue's Lyapunov drift: the queues provide demand signals
for training, whereas inference service is controlled by $\gamma(t)$.

\subsection{Federated Multitask Convergence}
\label{subsec:convergence}

The optimal training emphasis is mapped to a normalized task weight as
\begin{equation}
\omega_\tau^r
=
\frac{1+\beta\alpha_\tau^{*,r}}
{|\mathcal T|+\beta},
\qquad
\sum_{\tau\in\mathcal T}\omega_\tau^r=1,
\label{eq:normalized_task_weight}
\end{equation}
where $\beta\geq0$ controls the strength of demand-aware task prioritization.
Since $\alpha_\tau^{*,r}\geq0$,
\begin{equation}
\omega_\tau^r
\geq
\frac{1}{|\mathcal T|+\beta}
>0.
\label{eq:weight_lower_bound}
\end{equation}
Thus, every task retains a nonzero contribution to the local training
objective. When $\beta=0$, all tasks receive equal weights, whereas
increasing $\beta$ makes the task weights more responsive to the scheduler
output $\alpha_\tau^{*,r}$.

The local and global round-dependent objectives are
\begin{equation}
F_c^r(\Theta)=\sum_{\tau}\omega_\tau^rF_{c,\tau}(\Theta),
\qquad
F^r(\Theta)=\sum_c p_cF_c^r(\Theta),
\end{equation}
where $p_c=N_c/\sum_jN_j$.

We assume standard $L$-smoothness, bounded stochastic-gradient variance
$\sigma^2$, and bounded client heterogeneity:
\begin{equation}
\sum_{c=1}^{C}
p_c
\left\|
\nabla F_c^r(\Theta)-\nabla F^r(\Theta)
\right\|^2
\leq
G^2.
\end{equation}
To capture scheduler-induced objective changes, assume
\begin{equation}
\left|
F^{r+1}(\Theta^{r+1})
-
F^r(\Theta^{r+1})
\right|
\leq
\delta_r,
\qquad
\bar{\delta}_R
=
\frac{1}{R}
\sum_{r=0}^{R-1}\delta_r.
\label{eq:objective_variation}
\end{equation}

\begin{theorem}
\label{thm:fl_convergence}
Suppose each participating client performs $E$ local SGD steps with a
sufficiently small learning rate $\eta$, followed by FedAvg aggregation.
Then
\begin{equation}
\begin{aligned}
\frac{1}{R}\sum_{r=0}^{R-1}
\mathbb{E}\|\nabla F^r(\Theta^r)\|^2
\leq\;&
\frac{2(F^0(\Theta^0)-F_{\inf})}
{\eta E R}
\\
&+\mathcal{O}(\eta\sigma^2)
+\mathcal{O}(\eta^2E^2G^2)
\\
&+\mathcal{O}\!\left(
\frac{\bar{\delta}_R}{\eta E}
\right),
\end{aligned}
\label{eq:convergence_bound}
\end{equation}
where $F_{\inf}$ is a common finite lower bound.
\end{theorem}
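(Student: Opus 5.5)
We follow the standard one-round descent analysis for local SGD/FedAvg, applied to the round-$r$ objective $F^r$, and then add a telescoping correction for the weight changes. Fix round $r$ and let the weights $\omega_\tau^r$ be given by the scheduler. By (\ref{eq:weight_lower_bound}) and $\sum_\tau\omega_\tau^r=1$, each $F_c^r$ is a convex combination of the $L$-smooth task losses $F_{c,\tau}$. Hence $F_c^r$ and $F^r$ are $L$-smooth with the same constant $L$ for every $r$. The variance bound $\sigma^2$ also carries over to the weighted stochastic gradient, up to a constant factor. So within round $r$ the objective is fixed, and standard arguments apply.

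\textbf{Step 1 (descent within round $r$).} Write the aggregated update as $\Theta^{r+1}-\Theta^r=-\eta\sum_c p_c\sum_{k=0}^{E-1}g_c^{r,k}$, where $g_c^{r,k}$ is the stochastic gradient of $F_c^r$ at the local iterate $\Theta_c^{r,k}$. We assume full participation, or unbiased sampling with weights $p_c$, so that client selection contributes only a variance term. Applying $L$-smoothness of $F^r$ gives
\[
\mathbb E F^r(\Theta^{r+1})\le F^r(\Theta^r)-\tfrac{\eta E}{2}\mathbb E\|\nabla F^r(\Theta^r)\|^2+\tfrac{\eta L}{2}\sum_{c,k}p_c\mathbb E\|\nabla F_c^r(\Theta_c^{r,k})-\nabla F_c^r(\Theta^r)\|^2+\mathcal O(\eta^2E\sigma^2)+\ldots
\]
This uses the identity $\langle a,b\rangle=\tfrac12(\|a\|^2+\|b\|^2-\|a-b\|^2)$. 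The quadratic term in the averaged gradient is absorbed once $\eta LE\le 1$.

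\textbf{Step 2 (client drift).} This is the main technical step. We bound $\sum_c p_c\mathbb E\|\Theta_c^{r,k}-\Theta^r\|^2\le \mathcal O(\eta^2E\sigma^2+\eta^2E^2G^2+\eta^2E^2\|\nabla F^r(\Theta^r)\|^2)$ with the usual recursive argument. At each local step, split the gradient into the noise, the heterogeneity term (bounded by $G^2$ through the stated assumption), and the global gradient. Then unroll the recursion using $\eta LE\le 1/(2\sqrt{\cdot})$ so that the geometric factor $(1+1/E)^E$ stays bounded. The $\|\nabla F^r\|^2$ part is absorbed into the descent term for $\eta$ small enough. This is the place where "sufficiently small $\eta$" is used and constants must be tracked carefully. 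Multiplying by $L^2$ gives the $\mathcal O(\eta^3E^2G^2)$-type per-round terms, which become $\mathcal O(\eta^2E^2G^2)$ after dividing by $\eta E$.

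\textbf{Step 3 (objective switching and telescoping).} Use (\ref{eq:objective_variation}) to write $F^{r+1}(\Theta^{r+1})\le F^r(\Theta^{r+1})+\delta_r$. Combining this with Steps 1–2 gives
\[
\tfrac{\eta E}{4}\mathbb E\|\nabla F^r(\Theta^r)\|^2\le \mathbb E F^r(\Theta^r)-\mathbb E F^{r+1}(\Theta^{r+1})+\delta_r+\mathcal O(\eta^2E\sigma^2)+\mathcal O(\eta^3E^3G^2).
\]
Sum over $r=0,\dots,R-1$ and use $F^R(\Theta^R)\ge F_{\inf}$. The common lower bound holds because each $F^r$ is a convex combination of losses bounded below. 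Divide by $\eta ER/4$, rescaling constants so the leading term matches $2(F^0(\Theta^0)-F_{\inf})/(\eta ER)$. The $\delta_r$ sum then gives $\mathcal O(\bar\delta_R/(\eta E))$, and the noise and drift terms give $\mathcal O(\eta\sigma^2)$ and $\mathcal O(\eta^2E^2G^2)$, which is (\ref{eq:convergence_bound}).

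The key reason this works is that the dynamic weights enter only through (i) uniform smoothness and variance constants, guaranteed by the convex-combination structure, and (ii) the explicit $\delta_r$ terms. We therefore expect no difficulty beyond the usual local-SGD drift bookkeeping in Step 2.
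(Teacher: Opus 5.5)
Your proposal is correct in outline and follows the paper's own proof. Both use a fixed-objective Local-SGD descent inequality for $F^r$ (the paper cites it; you derive it via the polarization identity and a client-drift recursion), the switching bound $F^{r+1}(\Theta^{r+1})\le F^r(\Theta^{r+1})+\delta_r$, telescoping down to $F_{\inf}$, and division by $\eta E R$.

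The one caveat is shared with the paper's ``absorbing fixed constants'' step. Absorbing the $\|\nabla F^r(\Theta^r)\|^2$ part of the drift leaves you only $\frac{\eta E}{4}$, i.e.\ a leading constant $4$, and an explicit constant cannot be rescaled to the stated $2$. You can keep the full $\frac{\eta E}{2}$ by bounding the drift through the averaged local gradients $\|\sum_c p_c\nabla F_c^r(\Theta_c^{r,k})\|^2$ and cancelling these against the $-\frac{\eta}{2}\sum_k\|\cdot\|^2$ term from the polarization identity. Separately, the bound as stated requires full participation: unbiased client sampling adds a term of order $\eta E G^2/|\mathcal S_r|$, which does not appear in the bound.
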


\begin{proof}[Proof sketch]
For a fixed round-$r$ objective, standard Local-SGD/FedAvg analysis
~\cite{stich2019local,karimireddy2020scaffold,glasgow2022sharp}
gives, for sufficiently small $\eta$,
\begin{equation}
\begin{aligned}
\mathbb{E}[F^r(\Theta^{r+1})]
\leq\;&
\mathbb{E}[F^r(\Theta^r)]
-c\eta E\,
\mathbb{E}\|\nabla F^r(\Theta^r)\|^2
\\
&+
C_1\eta^2E\sigma^2
+
C_2\eta^3E^3G^2,
\end{aligned}
\label{eq:round_descent}
\end{equation}
for constants $c,C_1,C_2>0$.

Since the scheduler changes the task weights between rounds,
(\ref{eq:objective_variation}) implies
\[
F^{r+1}(\Theta^{r+1})
\leq
F^r(\Theta^{r+1})+\delta_r.
\]
Combining the two inequalities and summing over rounds gives
\begin{equation}
\begin{aligned}
c\eta E
\sum_{r=0}^{R-1}
\mathbb{E}\|\nabla F^r(\Theta^r)\|^2
\leq\;&
F^0(\Theta^0)-F_{\inf}
\\
&+\mathcal{O}(R\eta^2E\sigma^2)
\\
&+\mathcal{O}(R\eta^3E^3G^2)
+R\bar{\delta}_R.
\end{aligned}
\end{equation}
Dividing by $c\eta ER$ and absorbing fixed constants into
$\mathcal{O}(\cdot)$ yields (\ref{eq:convergence_bound}).
\end{proof}

The bound contains the standard optimization, stochastic-gradient, and
client-heterogeneity terms, together with an additional term caused by
scheduler-induced objective variation. If
$\eta=\Theta(R^{-1/2})$, $E$ is fixed, and
$\bar{\delta}_R=o(R^{-1/2})$, this additional tracking term vanishes
asymptotically. This is a conditional result and does not imply that the
scheduler itself guarantees this decay rate.
\section{Evaluation Results}
\label{sec:experiment}

\subsection{Multitask FL Setup}

We evaluate our framework on a two-task vision workload consisting of
Stanford Cars~\cite{krause2013cars} and Oxford Flowers 102~\cite{nilsback2008flowers}, using a shared ResNet backbone with
task-specific classification heads. All experiments are conducted on an NVIDIA
Tesla V100-SXM2 GPU with 32\,GB of memory.

We use 100 federated clients and randomly select 30\% of them in each
communication round. Each selected client performs one local epoch with
200 training steps using a learning rate of $10^{-3}$. Full-model FedAvg is used to aggregate both the shared backbone and all the task-specific heads.

\subsection{Inference Workloads}

We consider both synthetic and real-trace-driven inference workloads. 
For the synthetic workload, we generate independent Poisson arrivals for Cars and
Flowers with a baseline rate of $3$ requests/s. To emulate bursty demand,
the arrival rate of the selected task is increased by a factor of $4$
during predefined burst intervals. Request timestamps are aggregated into
fixed-length windows and used as arrival inputs to the online scheduler.

For the real-trace-driven workload, we derive task arrivals from the
Alibaba Cluster Trace 2018~\cite{alibaba2018trace}. The trace contains
production workloads collected from approximately 4,000 machines over
eight days, including colocated online services and batch workloads.
Instances are divided into two task streams according to their normalized
CPU and memory demands, with high-demand instances assigned to Cars and the
remaining instances assigned to Flowers.
We then normalize the two streams to maximum arrival counts of 900 and 200
requests per slot, respectively.

Figure~\ref{fig:workloads} shows the two inference workloads used in our experiments.
As shown in Figure~\ref{fig:workloads}, the synthetic workload contains a controlled burst in Stanford
Cars while Oxford Flowers remains relatively stable. In contrast, the
Alibaba trace-derived workload exhibits multiple irregular demand peaks.
Together, these workloads allow us to evaluate \ours{} under both controlled
and realistic workload dynamics.

\subsection{Scheduling and Metrics}

We obtain the maximum inference service capacity from offline hardware profiling.
At each scheduling interval, the proposed scheduler first determines the
minimum inference resource ratio required to satisfy the target SLO and allocates the remaining capacity to training. It then uses task-specific inference
demand to determine the training allocation across tasks and the corresponding dynamic loss
weights.

We compare \ours{} with two scheduling baselines. \textbf{Round Robin}
distributes the available training opportunities evenly across tasks,
without considering task-specific inference demand.
\textbf{Burst-Aware FIFO} prioritizes training according to the current
inference demand, assigning training opportunities first to the task with
the dominant request workload. It therefore reacts aggressively to workload
bursts, but may provide limited training opportunities to lower-demand tasks.

We report per-task accuracy, overall multitask accuracy, inference and
training resource ratios, and queue backlog. We evaluate accuracy during burst
periods to determine whether demand-aware scheduling advances the learning
progress of high-demand tasks to earlier rounds while maintaining comparable
long-term model accuracy.

\subsection{Overall Performance}
\label{sec:overall}


\begin{figure}[t]
    \centering
    \begin{minipage}{0.49\columnwidth}
        \centering
        \includegraphics[
            width=\linewidth,
            height=0.14\textheight
        ]{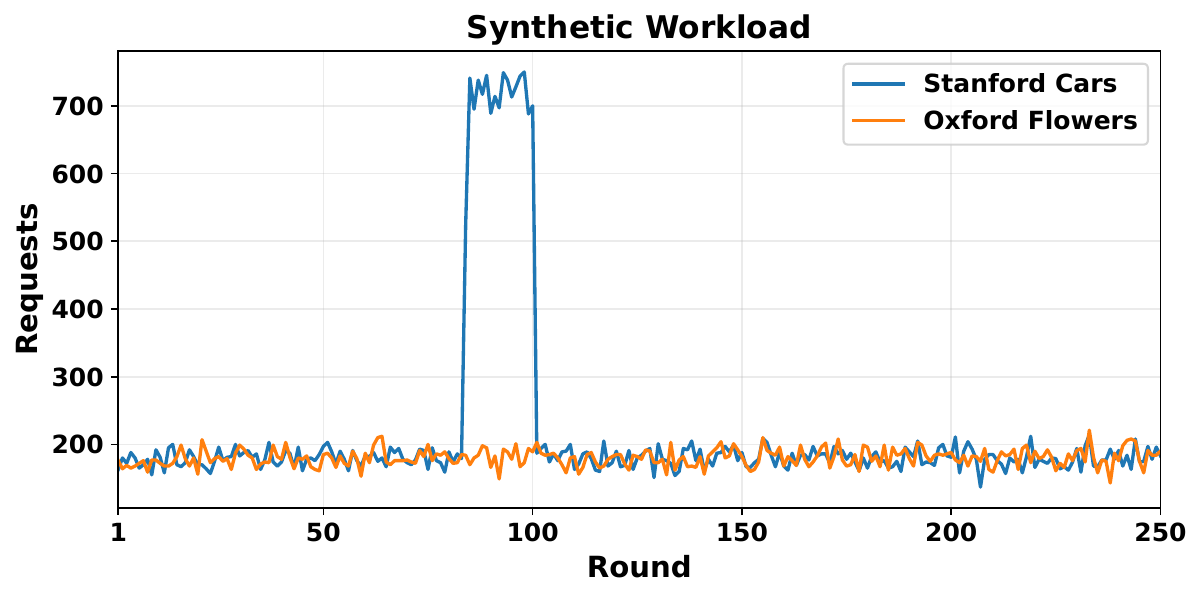}\\[-3mm]
        {\footnotesize (a) Synthetic workload}
    \end{minipage}
    \hfill
    \begin{minipage}{0.49\columnwidth}
        \centering
        \includegraphics[
            width=\linewidth,
            height=0.14\textheight
        ]{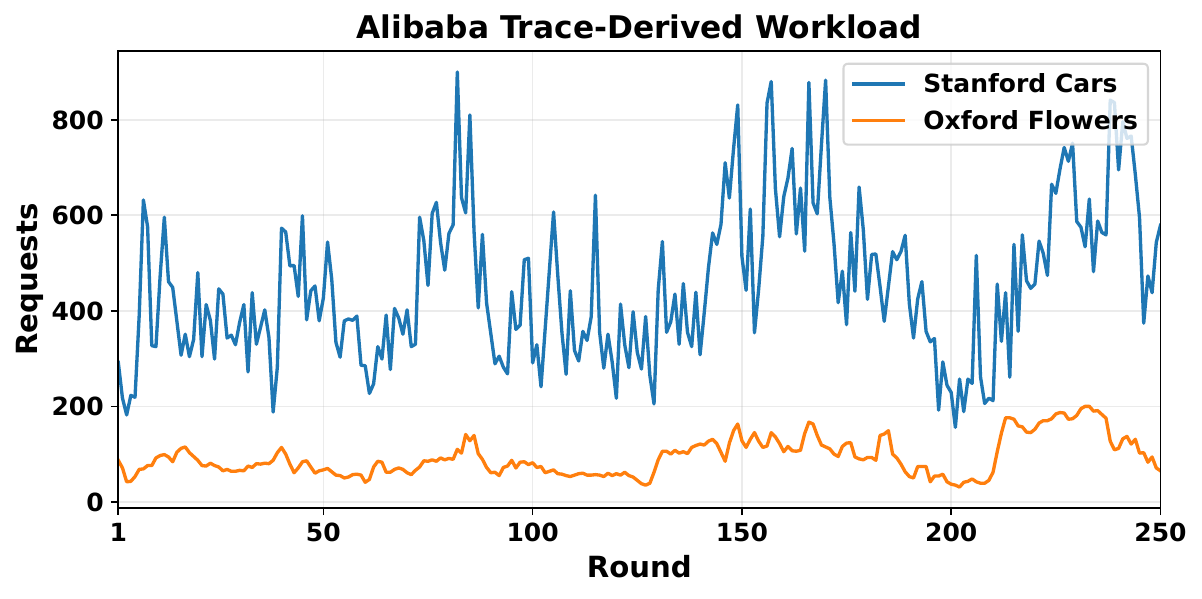}\\[-3mm]
        {\footnotesize (b) Alibaba workload}
    \end{minipage}

    \caption{Task-specific inference workloads: (a) synthetic and
    (b) real Alibaba cluster trace, used in our experiments.}
    \label{fig:workloads}
\end{figure}
Figure~\ref{fig:synthetic_acc} reports the inference accuracy under the
synthetic workload. During the Cars burst, \ours{} dynamically increases
the training priority of Stanford Cars. Averaged over the burst period,
\ours{} improves the Cars inference accuracy by 5.17\%
over Round Robin, with the instantaneous improvement reaching up to
15.9\% during the most demanding burst rounds.
These results demonstrate that demand-aware training enables the high-demand task
to reach higher accuracy earlier, precisely when its inference service demand
increases. This additional training emphasis on Stanford Cars temporarily reduces the
training capacity available to Oxford Flowers, resulting in a moderate
decrease in its inference accuracy during the same period.

Burst-Aware FIFO exhibits a more aggressive behavior. Since Cars dominates
the workload during the burst, it assigns nearly all available training
opportunities to Cars. Consequently, Stanford Cars achieves the highest inference
accuracy, but Oxford Flowers receives almost no training and therefore
maintains very low inference accuracy. This result exposes the limitation of
directly following the dominant workload: aggressively prioritizing the
high-demand task can effectively starve other tasks. In contrast,
\ours{} increases the training priority of Stanford Cars while preserving non-zero 
training capacity for Oxford Flowers, providing a more balanced response to bursty demand.

\begin{figure}[t]
    \centering
    \begin{minipage}{0.49\columnwidth}
        \centering
        \includegraphics[
            width=\linewidth,
            height=0.14\textheight
        ]{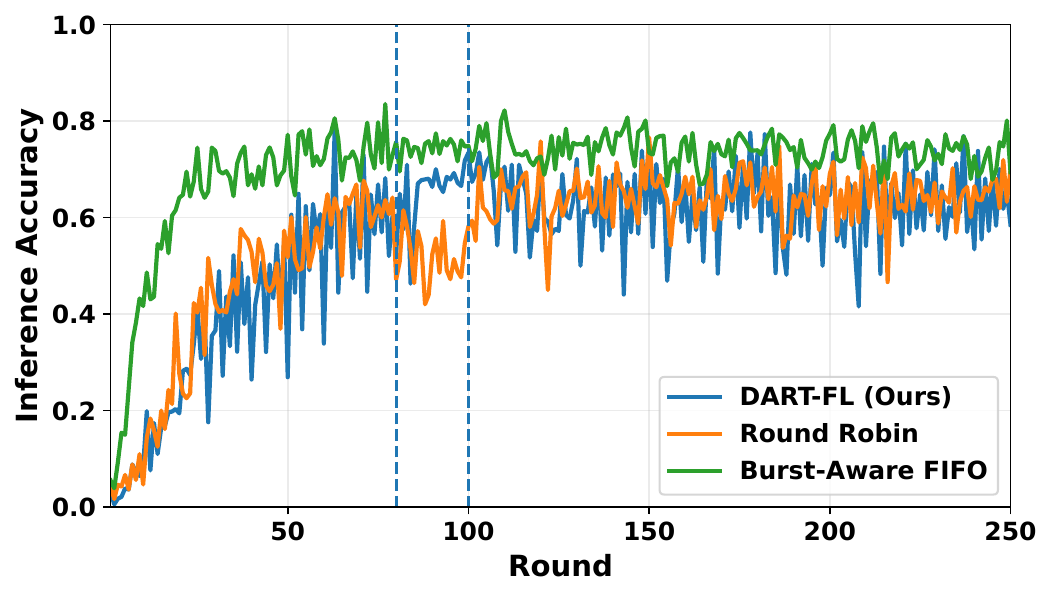}\\[-3mm]
        {\footnotesize (a) Stanford Cars}
    \end{minipage}
    \hfill
    \begin{minipage}{0.49\columnwidth}
        \centering
        \includegraphics[
            width=\linewidth,
            height=0.14\textheight
        ]{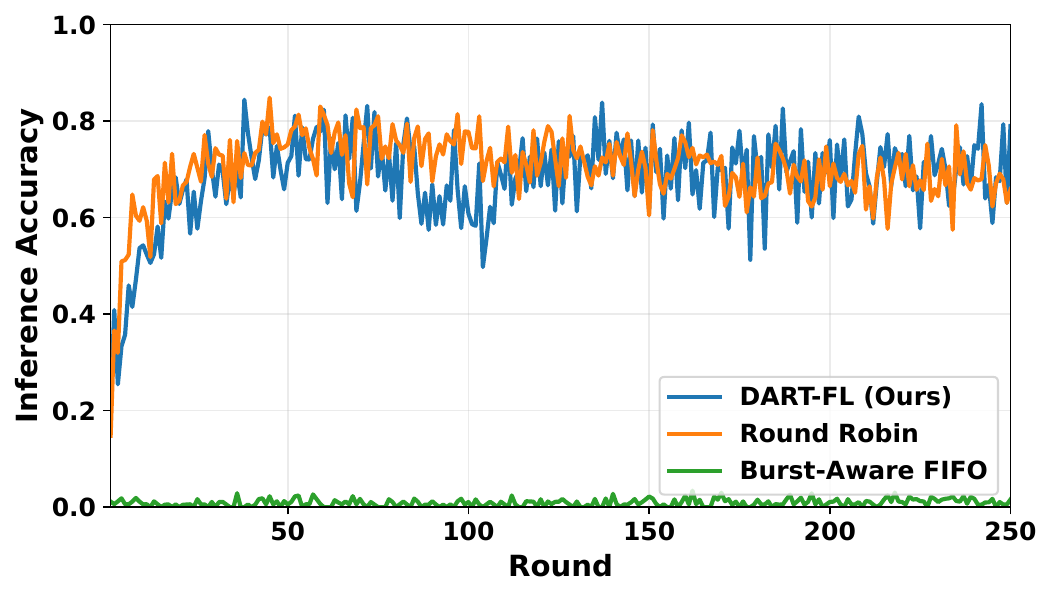}\\[-3mm]
        {\footnotesize (b) Oxford Flowers}
    \end{minipage}

    \caption{Inference accuracy under the synthetic workload. The dashed
    lines indicate the burst period in Stanford Cars.}
    \label{fig:synthetic_acc}
\end{figure}

Figure~\ref{fig:alibaba_acc} further evaluates the methods using the
Alibaba trace-derived workload. Unlike the controlled synthetic workload,
the Alibaba trace contains multiple irregular Cars demand peaks, highlighted
by the dashed regions in Fig.~\ref{fig:alibaba_acc}(a). We compute the
average inference accuracy over all rounds within each peak interval.
Compared with Round Robin, \ours{} improves the average Cars inference
accuracy from 0.618 to 0.654, from 0.650 to 0.727, and from 0.641 to
0.712 over the three peak periods, corresponding to relative improvements of 5.83\%, 11.85\%, and 11.08\%, 
respectively. These results show that
\ours{} consistently adapts training priorities to changes in task demand, even under the more
irregular workload dynamics of the real-world trace.

The increased emphasis on Stanford Cars is accompanied by a temporary reduction in
Oxford Flowers accuracy because less training capacity is allocated to the lower-demand
task during these burst periods. Nevertheless, \ours{} continues to reserve training
capacity for Flowers, preventing its learning progress from being completely stalled. Burst-Aware FIFO again represents a more aggressive
alternative: by heavily prioritizing the dominant Cars workload, it achieves
high Cars accuracy at the cost of providing very limited training
opportunities to Flowers. As a result, its Flowers inference accuracy
remains substantially low. \ours{} therefore achieves a better balance between
responding to high-demand tasks and maintaining the learning progress
of other tasks.

\begin{figure}[t]
    \centering
    \begin{minipage}{0.49\columnwidth}
        \centering
        \includegraphics[
            width=\linewidth,
            height=0.14\textheight
        ]{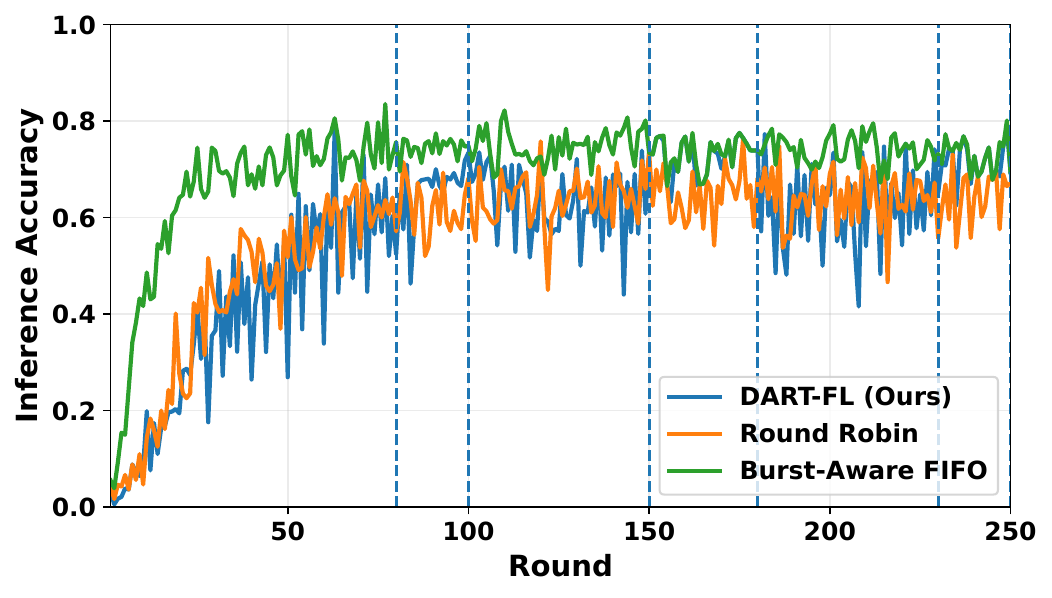}\\[-3mm]
        {\footnotesize (a) Stanford Cars}
    \end{minipage}
    \hfill
    \begin{minipage}{0.49\columnwidth}
        \centering
        \includegraphics[
            width=\linewidth,
            height=0.14\textheight
        ]{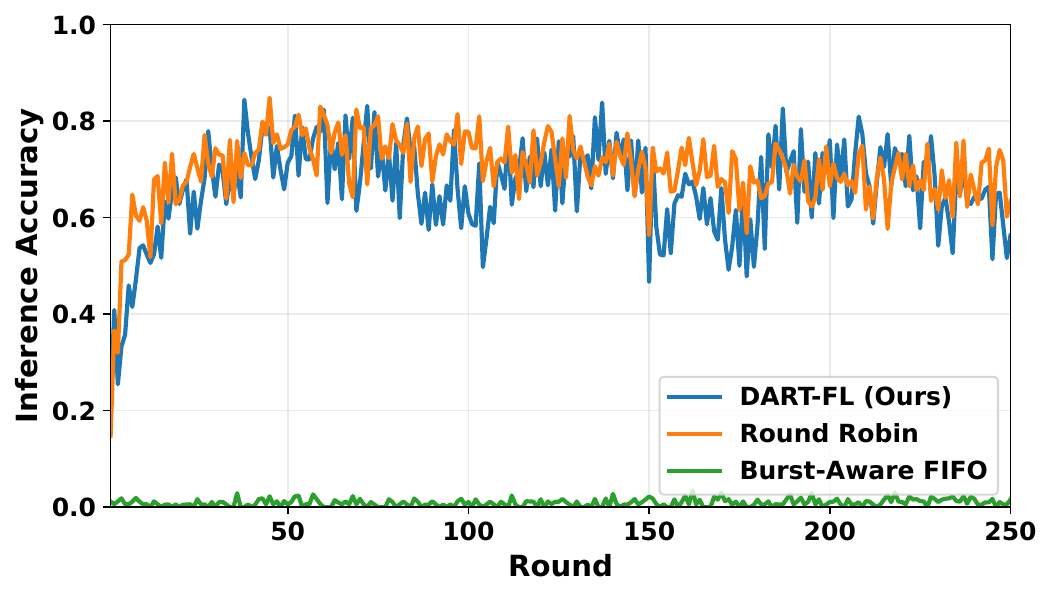}\\[-3mm]
        {\footnotesize (b) Oxford Flowers}
    \end{minipage}

    \caption{Inference accuracy under the Alibaba trace-derived workload.
    The dashed lines indicate the burst periods in Stanford Cars.}
    \label{fig:alibaba_acc}
\end{figure}

\subsection{Runtime Adaptation Behavior}
\label{sec:runtimeadapation}

To understand how these accuracy improvements are achieved,
Figure~\ref{fig:scheduling_behavior} illustrates the scheduling behavior of
\ours{} under the synthetic workload. As shown in Figure~\ref{fig:scheduling_behavior}(a), the inference
resource ratio increases from approximately 0.2 to 0.5 during the Cars
burst, while the training ratio decreases accordingly. The SLO-aware
resource allocator therefore reacts to the increased inference demand by
reserving more computation for request processing instead of maintaining a
fixed inference-training resource split.

A similar adaptive behavior is observed under the Alibaba workload in
Figure~\ref{fig:scheduling_behavior}(c). Since the Alibaba trace contains
multiple irregular demand peaks, the inference--training resource split
varies more frequently over time. During periods of increased inference
demand, \ours{} increases the inference resource ratio and correspondingly
reduces the resources assigned to training. When the demand decreases,
more computation is returned to model training. These results demonstrate
that \ours{} can dynamically adapt the inference--training resource split
under both controlled and real-world workload variations.

At the same time, \ours{} adapts how the remaining training capacity is
distributed across tasks. As shown in Figure~\ref{fig:scheduling_behavior}(b), the normalized
training loss weights are approximately balanced before the burst. When the
Cars burst occurs, the Stanford Cars weight increases from approximately
0.5 to 0.6, while the Oxford Flowers weight decreases from approximately
0.5 to 0.4. Thus, \ours{} performs adaptation at two levels: it first
increases the resources allocated to inference and then shifts the remaining
training effort toward the high-demand Cars task. This coordinated adaptation enables Cars to
improve its model accuracy during the burst and explains the corresponding inference
accuracy gains in Figure~\ref{fig:synthetic_acc}.

A similar behavior is observed under the Alibaba trace-derived workload.
As shown in Figure~\ref{fig:scheduling_behavior}(d), the normalized task
weights dynamically change with the time-varying demand. The Stanford Cars
weight remains generally higher when Cars demand dominates, while the Oxford
Flowers weight is reduced accordingly. Compared with the controlled
synthetic burst, these changes are more irregular because the Alibaba trace
contains multiple demand fluctuations. Together with the adaptive
inference-training resource split in
Figure~\ref{fig:scheduling_behavior}(c), these results show that \ours{} can
jointly adapt resource allocation and task-level training priorities under
real-world workload dynamics.

Importantly, the Oxford Flowers training weight remains non-zero throughout the
burst. This distinguishes \ours{} from Burst-Aware FIFO, which can devote
nearly all available training opportunities to the dominant Cars workload and
effectively stall the Flowers training. Hence, \ours{} responds to bursty
demand while continuing to preserve training progress for lower-demand tasks. 


\begin{figure}[t]
    \centering

    \begin{minipage}{0.49\columnwidth}
        \centering
        \includegraphics[
            width=\linewidth,
            height=0.14\textheight
        ]{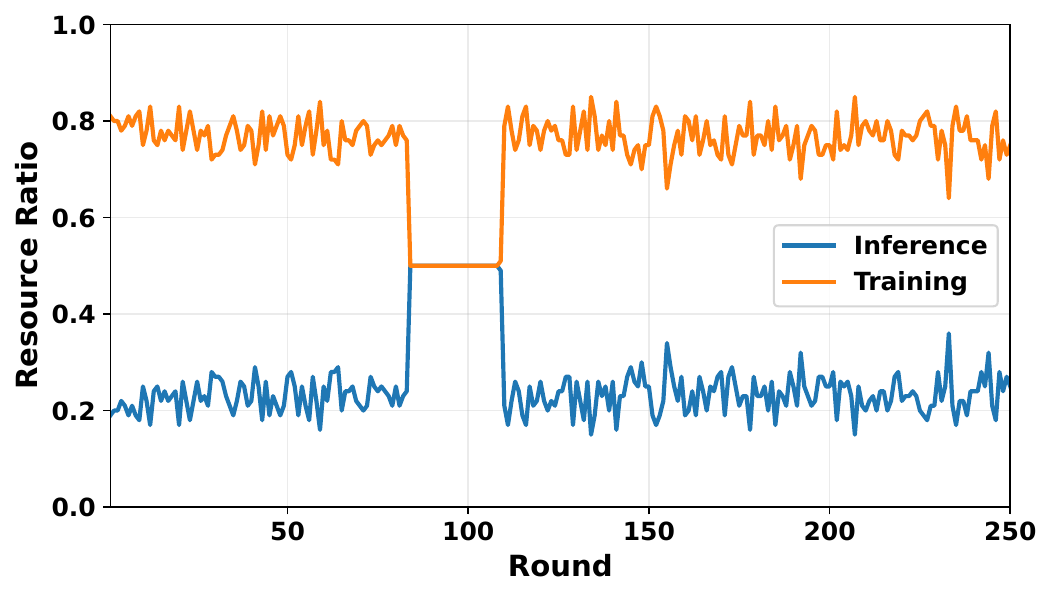}\\[-3mm]
        {\footnotesize (a) Synthetic: resource ratio}
    \end{minipage}
    \hfill
    \begin{minipage}{0.49\columnwidth}
        \centering
        \includegraphics[
            width=\linewidth,
            height=0.14\textheight
        ]{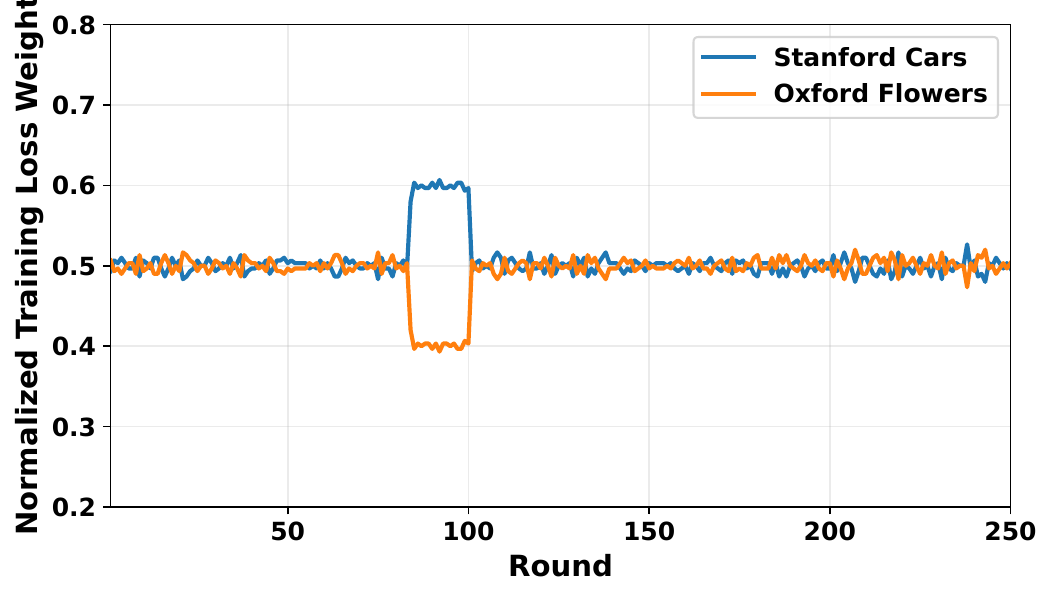}\\[-3mm]
        {\footnotesize (b) Synthetic: task weights}
    \end{minipage}

    \vspace{2mm}

    \begin{minipage}{0.49\columnwidth}
        \centering
        \includegraphics[
            width=\linewidth,
            height=0.14\textheight
        ]{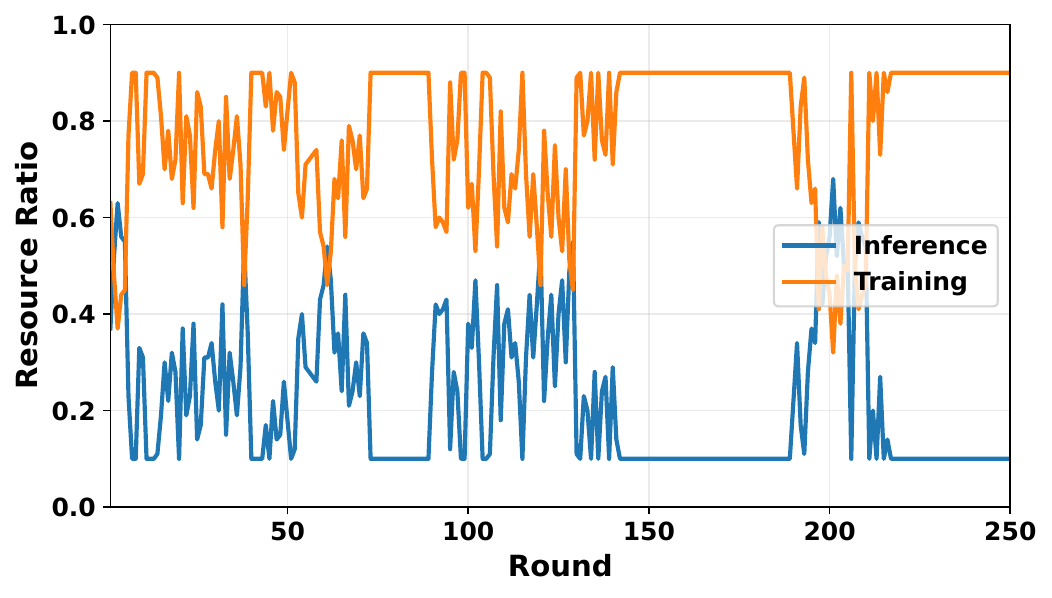}\\[-3mm]
        {\footnotesize (c) Alibaba: resource ratio}
    \end{minipage}
    \hfill
    \begin{minipage}{0.49\columnwidth}
        \centering
        \includegraphics[
            width=\linewidth,
            height=0.14\textheight
        ]{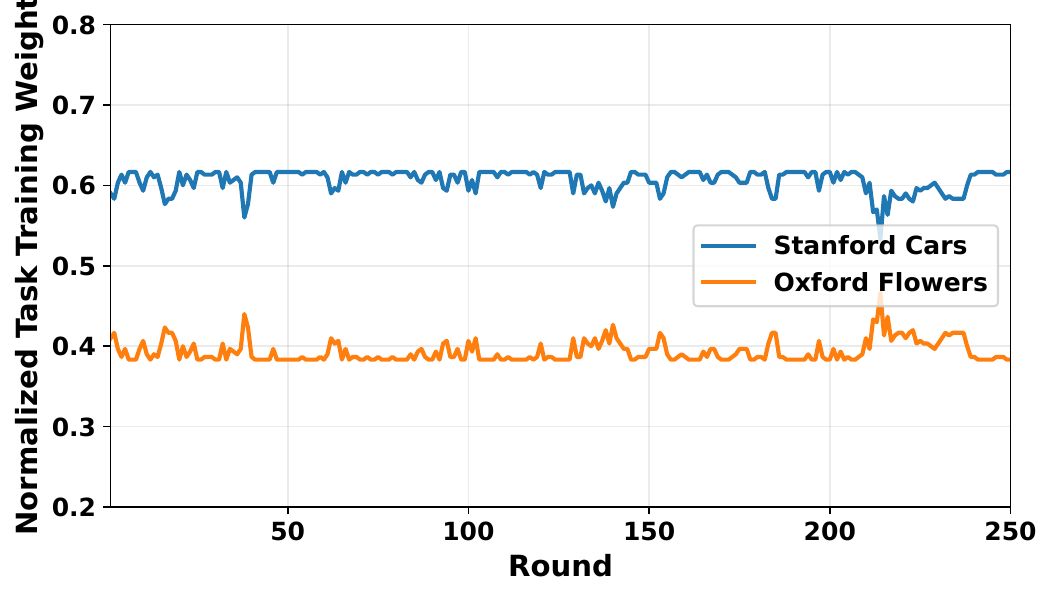}\\[-3mm]
        {\footnotesize (d) Alibaba: task weights}
    \end{minipage}

    \caption{Adaptive scheduling behavior of \ours{} under synthetic and
    Alibaba trace-derived inference workloads.}
    \label{fig:scheduling_behavior}
\end{figure}

Overall, the results demonstrate the trade-off between demand responsiveness
and multitask balance. Round Robin maintains balanced training but cannot
respond to task-specific demand changes, whereas Burst-Aware FIFO
aggressively follows the dominant workload and can starve lower-demand
tasks. \ours{} avoids these two extremes by jointly adapting the
inference-training resource split and the task-level training weights.
Consequently, it accelerates learning and improves model accuracy for high-demand tasks during burst periods while preserving appropriate training opportunities for lower-demand tasks, achieving a better balance between short-term demand responsiveness and sustained multitask learning.

\section{Conclusion}
\label{sec:conclusion}

In this paper, we presented \ours{}, a burst-aware scheduling framework for
multitask federated learning with concurrent inference and training. \ours{}
dynamically adjusts the inference--training resource split and uses a
queue-aware DPP-inspired scheduler to adapt task-level training weights according to
time-varying inference demand.

Experiments with synthetic and Alibaba trace-derived workloads show that
\ours{} improves the inference accuracy of high-demand tasks during burst
periods while preserving training opportunities for other tasks. Compared
with Round Robin and Burst-Aware FIFO, \ours{} achieves a better balance
between demand responsiveness and multitask learning. Future work will extend
the framework to larger-scale multitask systems with more heterogeneous
clients and dynamic workloads.

\bibliographystyle{IEEEtran}
\bibliography{cumtom}

\end{document}